\let\proof\@undefined
\let\endproof\@undefined
\newtheorem{theorem}{Theorem}[section]
\newtheorem{proposition}[theorem]{Proposition}
\newtheorem{lemma}[theorem]{Lemma}
\theoremstyle{definition}
\newtheorem{definition}[theorem]{Definition}
\newtheorem{problem}[theorem]{Problem}
\theoremstyle{remark}
\newtheorem{remark}[theorem]{Remark}
\newcommand{\W}{\mathcal{W}}
\newcommand{\Ropt}{R_{\texttt{OPT}}}
\DeclareMathOperator*{\argmin}{arg\,min}
\title{\large \bf Multi-Robot Routing for Persistent Monitoring with Latency Constraints}
\author{Ahmad Bilal Asghar \qquad Stephen L. Smith \qquad Shreyas Sundaram\thanks{This research is partially
    supported by the Natural Sciences and Engineering Research Council
    of Canada (NSERC). }
  \thanks{Ahmad Bilal Asghar and Stephen L. Smith are with the Department of Electrical and
   Computer Engineering, University of Waterloo, Waterloo, ON N2L 3G1
   Canada.  Shreyas Sundaram is with the School of Electrical and
         Computer Engineering, Purdue University, West Lafayette, IN 47907 USA. }}    % Declares the author's name.
\date{}
\begin{document}
\maketitle

\begin{abstract}
In this paper we study a multi-robot path planning problem for persistent monitoring of an environment. We represent the areas to be monitored as the vertices of a weighted graph. For each vertex, there is a constraint on the maximum time spent by the robots between visits to that vertex, called the \emph{latency}, and the objective is to find the minimum number of robots that can satisfy these latency constraints. The decision version of this problem is known to be PSPACE-complete. We present a $O(\log \rho)$ approximation algorithm for the problem where $\rho$ is the ratio of the maximum and the minimum latency constraints. We also present an orienteering based heuristic to solve the problem and show through simulations that in most of the cases the heuristic algorithm gives better solutions than the approximation algorithm. We evaluate our algorithms on large problem instances in a patrolling scenario and in a persistent scene reconstruction application. We also compare the algorithms with an existing solver on benchmark instances. % and show that our heuristic algorithm runs $50$ to $1500$ times faster while returning equivalent solutions on $98\%$ of benchmark instances.
%\keywords{Motion and Path Planning, Approximation Algorithm}
\end{abstract}
\section{Introduction}
With the rapid development in field robotics, teams of robots can now perform long term monitoring tasks. Examples of such tasks include infrastructure inspection~\cite{cabrita2010infrastructure} to detect presence of anomalies or failures; patrolling for  surveillance~\cite{basilico2012patrolling,asghar2016stochastic} to detect threats in the environment; 3D reconstruction of scenes~\cite{roberts2017submodular,bircher2015structural} in changing environments; informative path planning~\cite{cao2013multi} for observing dynamic properties of an area; and forest fire monitoring~\cite{merino2012unmanned}. In such persistent monitoring scenarios, locations in an environment need to be visited repeatedly by a team of robots. Since the duration of the events, or the rate of change of the properties to be monitored, can be different for different locations, each location will have a different latency constraint, which specifies the maximum time allowed between consecutive visits to that location. In this paper we study the problem of finding a set of paths that continually visit a set of locations while collectively satisfying the latency constraints on each location.

We represent the locations of interest in the environment as the vertices of a graph. The edge weights give the travel time between the vertices and each vertex has a latency constraint that defines the maximum allowed time between two consecutive visits to that vertex. The problem is to find walks on the graph for the minimum number of robots that can satisfy the latency constraints.

\emph{Related Work:} Persistent monitoring problems have been extensively studied in the literature~\cite{hokayem2007persistent,nigam2008persistent,nigam2012control,elmaliach2009multi}. In~\cite{palacios2016multi}, persistent coverage using multiple robots in a continuous environment is considered. The problem of determining a visit sequence for a set of locations along with the time spent at each location to gather information is considered in~\cite{yu2018optimal,yu2017optimal}. For the problem with latency constraints, the authors in~\cite{las2013persistent} use incomplete greedy heuristics to find if a single robot can satisfy the constraints on all vertices of a graph. They show that if a solution exists, then a periodic solution also exists. In this paper, we consider the multi-robot problem and our objective is to minimize the number of robots that can satisfy the latency constraints on the given graph. This problem has been considered in~\cite{drucker2016cyclic,ho2015cyclic}, where it is called \emph{Cyclic Routing of Unmanned Aerial Vehicles}. The decision version of the problem for a single robot is shown to be PSPACE-complete in~\cite{ho2015cyclic}. The authors also show that the length of even one period of a feasible walk  can be exponential in the size of the problem instance. In~\cite{drucker2016cyclic}, the authors propose a solver based on
% give a lower bound on the number of robots required and use a
Satisfiability Modulo Theories (SMT). To apply an SMT solver, they impose an upper bound on the length of the period of the solution.  Since an upper bound is not known \emph{a priori}, the solver will not return the optimal solution if the true optimal period exceeds the bound.  The authors generate a library of test instances, but since their algorithm scales exponentially with the problem size, they solve instances up to only 7 vertices. We compare our algorithms with their solver and show that our algorithms run over $500$ times faster on average and return solutions with the same number of robots on $98\%$ of the benchmark instances provided by~\cite{drucker2016cyclic}.

A related single robot problem is studied in~\cite{Alamdari2014} where each vertex has a weight associated with it and the objective is to minimize the maximum weighted latency (time between consecutive visits) for an infinite walk. The authors provide an approximation algorithm for this problem. The authors in~\cite{basilico2012patrolling} study the single robot problem for a security application where the length of attack on each vertex of the graph is given.  To intercept all possible attacks, they design an algorithm to repeatedly patrol all vertices with the maximum revisit time to each vertex less than its length of attack.

% propose an algorithm based on the premise that the problem is NP-Complete that is shown to be wrong in~\cite{ho2015cyclic}.

Timed automata have been used to model general multi-robot path planning problems~\cite{quottrup2004multi} as the clock states can capture the concurrent time dependent motion. In~\cite{ulusoy2013optimality}, temporal logic constraints are used to specify high level mission objectives to be achieved by a set of robots. The routing problem with latency constraints can also be modeled as a timed automaton since multiple robots may require synchronization to satisfy the latency constraints. A timed automaton based solution to the problem is presented in~\cite{Drucker2014thesis}, however it is shown to perform more poorly than the SMT-based approach in~\cite{drucker2016cyclic}, which we use as a comparison for our proposed algorithms.

Several vehicle routing problems are closely related to persistent monitoring with latency constraints. In the \emph{vehicle routing problem with time windows}~\cite{braysy2005vehicle}, customers have to be served within their time windows by several vehicles with limited capacity. The goal is to minimize the number of vehicles required. Since the problem does not require repeated visits, the length of the resulting tour is polynomially bounded, and thus the problem is in NP. In the \emph{deadline-TSP}~\cite{tsitsiklis1992special}, the vertices have deadlines for first visits. In the \emph{period vehicle routing problem}~\cite{christofides1984period}, the problem is to design routes for each day of a given period where each customer may require a number of visits (in a given number of allowable combinations) during this period. The main difference between these problems and the cyclic routing problem with latency constraints is that the latency constraints need to be satisfied indefinitely which makes it harder than these problems.

\emph{Contributions:} We present a $O(\log \rho)$ approximation algorithm for the problem where $\rho$ is the ratio of the maximum and the minimum latency constraints (Section~\ref{sec:approx}). We also provide an algorithm for the problem of minimizing the maximum weighted latency with multiple robots and show that an approximation algorithm for this problem yields a bi-criterion approximation for our problem. We present an orienteering based heuristic algorithm to solve the problem and prove its completeness (Section~\ref{sec:greedy}). We show through simulations that the heuristic algorithm gives better solutions than the approximation algorithm. We evaluate the performance of the algorithms on large problem instances in a patrolling scenario and in an image collection application for 3D scene reconstruction. We also compare our algorithms against an existing solver on benchmark instances (Section~\ref{sec:sim}).

\section{Background and Notation}
\label{sec:background}
Given a directed graph $G=(V,E)$ with edge lengths $l(e)$ for each $e\in E$, a \emph{simple walk} in graph $G$ is defined as a sequence of vertices $(v_1,v_2,\ldots,v_k)$ such that $(v_i,v_{i+1})\in E$ for each $1\leq i < k$. An \emph{infinite walk} is a sequence of vertices $(v_1,v_2,\ldots)$ such that $(v_i,v_{i+1})\in E$ for each $i\in \mathbb{N}$. Given walks $W_1$ and $W_2$, $[W_1 ,W_2]$ represents the concatenation of the walks. Given a finite walk $W$, an infinite periodic walk is constructed by concatenating infinite copies of W together, and is denoted by $\Delta (W)$. A \emph{cycle} is a simple walk that starts and ends at the same vertex with no other vertex appearing more than once.

In general, a walk can stay for some time at a vertex before traversing the edge towards the next vertex. Therefore we define a \emph{timed walk} $W$ in graph $G$ as a sequence $(o_1,o_2,\ldots,o_k)$, where $o_i=(v_i,t_i)$ is an ordered pair that represents the holding time $t_i$ that the walk $W$ spends at vertex $v_i$, such that $(v_i,v_{i+1})\in E$ for each $1\leq i < k$. The definitions of infinite walk and periodic walk can be extended to infinite timed walk and periodic timed walk. A walk with ordered pairs of the form $(v_i,0)$ becomes a simple walk. The vertices traversed by walk $W$ are given by $V(W)$ and the length of walk $W=((v_1,t_1),(v_2,t_2),\ldots,(v_k,t_k))$ is given by $l(W) = \sum_{i=1}^{k-1}{l(v_i,v_{i+1})} + \sum_{i=1}^{k}{t_k}$.
Since we are considering a multi-robot problem, synchronization between the walks is important. Given a set of walks $\W = \{W_1,W_2,\ldots, W_k\}$ on graph $G$, we assume that at time $0$, each robot $i$ is at the first vertex $v^i_1$ of its walk $W_i$, and will spend the holding time $t^i_1$ at that vertex before moving to $v^i_2$.

Given a graph $G$ and length $\lambda$, the\textsc{ Minimum Cycle Cover Problem} (MCCP) is to find minimum number of cycles that cover the whole graph such that the length of the longest cycle is at most $\lambda$. This problem is NP-hard and a $14/3$-approximation algorithm for MCCP is given in~\cite{yu2016improved}.

Given a graph $G=(V,E)$ with vertex weights $\psi_i$ for $i\in V$, and length $\lambda$, the \textsc{Orienteering} problem is to find a path from vertex $s$ to $t$ of length at most $\lambda$ such that the sum of the weights on the vertices in the path is maximized. This problem is also NP-hard and a $(2+\epsilon)$-approximation is given in~\cite{chekuri2012improved}.

\section{Problem Statement}
\label{sec:problem}
Let $G=(V,E)$ be a directed weighted graph with edge lengths $l(e)$ for each $e\in E$. The edge lengths are metric and represent the time taken by the robot to travel between the vertices. The latency constraint for each vertex $v$ is denoted by $r(v)$ and represents the maximum time allowed between consecutive visits to $v$. The time taken by the robots to inspect a vertex can be added to the length of the incoming edges of that vertex to get an equivalent metric graph with zero inspection times and modified latency constraints~\cite{drucker2016cyclic}. Hence, we assume that the time required by the robots to inspect a vertex is zero.  We formally define the problem statement after the following definition.

\begin{definition}[Latency]
Given a set of infinite walks $\W = \{W_1,W_2,\ldots, W_k\}$ on a graph $G$, let $a^v_i$ represent the $i^{th}$ arrival time for the walks to vertex $v$. Similarly, let $d^v_i$ represent the $i^{th}$ departure time from vertex $v$. Then the latency $L(\W,v)$ of vertex $v$ on walks $\W$ is defined as the maximum time spent away from vertex $v$ by the walks, i.e., $L(\W,v) = \sup_{i}{(a^v_{i+1}-d^v_i)}$.
\end{definition}

%\textbf{Minimizing Maximum Weighted Latency:} Given the weight $\phi(v)$ for all vertices $v\in V$, the weighted latency of $v$ is defined as $C(\W,v)=\phi(v)L(\W,v)$. Given the number of robots $R$, the problem of minimizing maximum weighted latency is to find a set of $R$ infinite walks $\W=\{W_1,W_2,\ldots,W_R\}$ such that the cost $\max_v C(\W,v)$ is minimized. Without loss of generality, $\phi(v)$ is assumed to be normalized such that $\max_v\phi(v)=1$.

\begin{problem}[Minimizing Robots with Latency Constraints]
\label{pbm:min_robs} Given the latency constraints $r(v), \forall v\in V$, the optimization problem is to find a set of walks $\W$ with minimum cardinality such that $L(\W,v) \leq r(v), \forall v \in V$.
\end{problem}
The decision version of the problem is to determine whether there exists a set of $R$ walks $\W=\{W_1,W_2,\ldots,W_R\}$ such that $L(\W,v) \leq r(v)$ for all $v \in V$. Note that a general solution to Problem~\ref{pbm:min_robs} will be a set of timed walks with possibly non-zero holding times.

In this problem definition, the graph and its edge lengths capture the robot motion in the environment.  This graph can be generated from a probabilistic roadmap, or any other environment decomposition method.  The latency constraints provide the maximum allowable time between visits to a vertex. For example, in dynamic scene reconstruction, each vertex corresponds to a viewpoint~\cite{roberts2017submodular}. The latency constraints may encode the maximum staleness of information that can be tolerated for the voxels captured at that viewpoint.
%Although we assume that the inspection time of vertices is zero, note that the holding times of the walks in a solution might still be non zero to achieve synchronization among the walks.

%\subsection{Complexity of the Problem}
%The decision version of the problem has been shown to be PSPACE-complete in~\cite{ho2015cyclic} even for a single robot. In~\cite{las2013persistent}, the authors show that if a solution exists for the decision version of the problem, then a periodic solution also exists for that problem. This means that instead of searching for infinite walks as the solutions to the problem, we can search for finite walks $W$ such that $\Delta(W)$ is a solution to the problem. However the length of even one period of the feasible walk can be exponential in the size of the problem instance~\cite{ho2015cyclic}.

\subsection{Multiple Robots on the Same Walk}\label{sec:factor_R}
In multi-robot problems that involve finding cycles or tours in a graph, equally placing $n$ robots on a tour reduces the cost of that tour by a factor of $n$~\cite{chevaleyre2004theoretical}. That is not true for Problem~\ref{pbm:min_robs}: if a periodic walk $W$ gives latency $L(W,v)$ on vertex $v$, equally spacing more robots on one period of that walk does not necessarily reduce the latency for that vertex. Figure~\ref{fig:example1} gives an example of such a walk. The latency of vertices $a,b$ and $c$ on the walk $(a,b,a,c)$ are $2,4$ and $4$ respectively. The length of one period of the walk is $4$. If we place another robot following the first robot with a lag of $2$ units, the latency of vertex $a$ remains the same. If we place the second robot at a lag of $1$ unit, the latency will reduce to $1$ for vertex $a$ and $3$ for vertices $b$ and $c$.
Hence we need more sophisticated algorithms than finding a walk for a single robot and adding more robots on that walk until the constraints are satisfied, unless that walk is a cycle.
\begin{figure}[t]
\centering
%\vspace*{1mm}
\includegraphics[width=.4\linewidth]{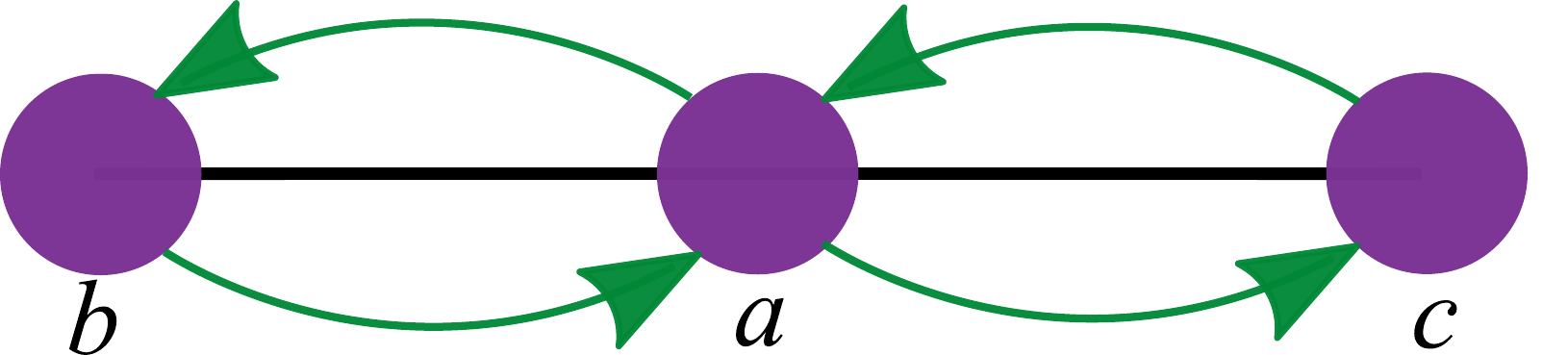}
\caption{A graph with three vertices and the walk $(a,b,a,c)$. The length of shown edges is one. Equally placing two robots on this walk does not halve the latencies.}
\label{fig:example1}
\end{figure}

\section{Approximation Algorithm}
\label{sec:approx}
%\subsection{Bi-criterion Approximation}
\iffalse
Idea from extreme case: What if we find a TSP of the whole graph and then equally place the robots to decrease latency of each vertex until the minimum latency is satisfied.

What if I just divide the levels and equally place on a TSP? Then the question is that how many robots will I need to place on that TSP to satisfy all constraints. depends on the size of TSP.

If TSP length is more that the maximum required latency in that partition, then one robot could not have been optimal and we would need MCCP result.
\fi

Since Problem~\ref{pbm:min_robs} is PSPACE-complete, we resort to finding approximate and heuristic solutions to the problem. In this section, we present an approximation algorithm for the problem.
\subsection{\text{$O(\log\rho)$} Approximation}
We first mention a simple approach to the problem and then improve it incrementally to get the approximation algorithm. One naive solution is to find a TSP tour of the graph and equally place robots on that tour to satisfy all the latency constraints. However, a single vertex with a very small $r(v)$ can result in a solution with the number of robots proportional to $1/r(v)$. To solve this issue, we can partition the vertices of the graph such that the latencies in one partition are close to each other, and then place robots on the TSP tour of each partition. If more than one robot is required for a partition $V'$, then another approach is to solve the MCCP for that partition. The benefit of using the MCCP over placing multiple robots on a TSP is that if all the vertices in $V'$ had the same latency requirement, then we have a guarantee on the number of cycles required for that partition. However, a general solution to the problem might not consist of simple cycles. Lemma~\ref{lem:yann} relates solutions consisting of cycles to general solutions and shows that a solution consisting of cycles will have latencies no more than twice that of any general solution with same number of robots. Therefore, if we solve the MCCP on a partition with its latency constraints multiplied by two, we have a guarantee on the number of cycles. We can then use the naive idea from TSP based solutions and place multiple robots on each cycle to satisfy the latency constraints.

The approximation algorithm is given in Algorithm~\ref{approx_latency}. The first four lines of the algorithm partition the vertices according to their latency constraints. For each of those partitions, the function MCCP$(V, \lambda)$ called in line~\ref{algln:MCCP} uses an approximation algorithm for the minimum cycle cover problem from~\cite{yu2016improved}. Then, the appropriate number of robots are placed on each cycle returned by the MCCP function to satisfy the latency constraints. We will need the following definition to establish the approximation ratio of Algorithm~\ref{approx_latency}. A similar relaxation technique was used in~\cite{Alamdari2014}.
%The function MCCP$(V, \lambda)$ called in line~\ref{algln:MCCP} of the algorithm uses one of the approximation algorithms for minimum cycle cover problem from~\cite{yu2016improved}. We will need the following definition to establish the approximation ratio of Algorithm~\ref{approx_latency}. A similar relaxation technique was used in~\cite{Alamdari2014}.
\begin{algorithm}[t]
	\openup -.1em
  \caption{\textsc{ApproximationAlgorithm}}
	\label{approx_latency}
	\begin{algorithmic}[1]
	\Statex Input: Graph $G=(V,E)$, latency constraints $r(v), \forall v$
	\Statex Output: A set of walks $\W$, such that $L(\W,v)\leq r(v)$
  \vspace{0.2em}
  \hrule
  \vspace{0.2em}
	\State Let $r_{\texttt{max}}=\max_v r(v)$, $r_{\texttt{min}}=\min_v r(v)$, $\rho = \frac{r_{\texttt{max}}}{ r_{\texttt{min}}}$
	\If {$r_{\texttt{max}}/ r_{\texttt{min}}$ is an exact power of $2$} $\rho = \frac{r_{\texttt{max}}}{ r_{\texttt{min}}}+1$ \EndIf
	\State $\W = \{\}$
	%\State Find integer $x$ such that $x\lceil \log_x \rho\rceil$ is minimized
	\State Let $V_{i}$ be the set of vertices $v$ such that  $ r_{\texttt{min}}2^{i-1}\leq r(v) < r_{\texttt{min}}2^{i} $  for $1\leq i \leq \lceil \log_2 \rho \rceil $\label{algln:levels}
	\For {$i=1$ to $\lceil \log_2 \rho \rceil$}
	\State  $\mathcal{C} =$ MCCP$(V_{i}, r_{\texttt{min}}2^{i+1})$
	\label{algln:MCCP}
	\For {$C$ in $\mathcal{C}$}
	\State Equally place $\lceil l(C)/\min_{v\in V(C)}{r(v)} \rceil$ robots on \hspace*{9.5mm} cycle $C$ to get walks $\W'$\label{algln:final}
	\State $\W=\{\W,\W'\}$
	\EndFor
	\EndFor
	\end{algorithmic}
\end{algorithm}

\begin{definition}
Let $r_{\texttt{min}}=\min_v r(v)$. The latency constraints of the problem are said to be relaxed if for any vertex $v$, its latency constraint is updated from $r(v)$ to $\bar{r}(v) = r_{\texttt{min}}2^{x}$ such that $x$ is the smallest integer for which $r(v) < r_{\texttt{min}}2^{x}$.
\end{definition}
We will also need the following lemma that follows from Lemma 2 in~\cite{chevaleyre2004theoretical} and we omit the proof for brevity.
\begin{lemma}
\label{lem:yann}
For any set of walks $\mathcal{W}$ on an undirected metric graph with vertices $V$, there exists a set of walks $\mathcal{W}'$ on $V$ with $|\mathcal{W}|=|\mathcal{W}'|$, such that each walk $W_i \in \mathcal{W}'$ is a cycle of vertices $V_j\subseteq V$, and the sets $V_j$ partition $V$, and $\max_v L(\mathcal{W}',v) \leq 2 \max_vL(\mathcal{W},v)$.
\end{lemma}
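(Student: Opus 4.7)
The plan is to convert $\mathcal{W}$ into $\mathcal{W}'$ by using a short time-snapshot of the schedule to partition $V$ among the walks, then producing one cycle per walk by metric shortcutting. Let $L^{\ast} := \max_v L(\mathcal{W}, v)$.

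First I would establish a \emph{snapshot claim}: for any time $T_0$ past the initial transient, every vertex $v \in V$ is visited by at least one walk of $\mathcal{W}$ during the interval $I := [T_0, T_0 + L^{\ast}]$. Otherwise the gap between two consecutive visits to $v$ across $\mathcal{W}$ would exceed $L^{\ast}$, contradicting the definition of $L^{\ast}$. Fix such an $I$.

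Next, for each vertex $v$ pick a walk $W_{f(v)} \in \mathcal{W}$ that visits $v$ inside $I$ and set $V_i := \{v : f(v) = i\}$; these sets partition $V$, with empty parts permitted (treated as trivial cycles so that $|\mathcal{W}'| = |\mathcal{W}|$). Let $S_i$ denote the portion of $W_i$ executed during $I$, so $l(S_i) \le L^{\ast}$ and $S_i$ touches every vertex of $V_i$. Listing the vertices of $V_i$ in the order $w_1, \ldots, w_m$ of their first appearance in $S_i$, I would form $C_i$ as the cycle $w_1 \to w_2 \to \cdots \to w_m \to w_1$, each transition realized by a shortest path in the undirected metric graph $G$. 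Writing $t_1 < \cdots < t_m$ for the first-appearance times, the triangle inequality gives $\sum_{j=1}^{m-1} d(w_j, w_{j+1}) \le \sum_{j=1}^{m-1} (t_{j+1} - t_j) = t_m - t_1 \le L^{\ast}$, and undirectedness gives $d(w_m, w_1) = d(w_1, w_m) \le t_m - t_1 \le L^{\ast}$. Hence $l(C_i) \le 2 L^{\ast}$.

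Running each $C_i$ periodically forces the latency of every $v \in V_i$ on $\mathcal{W}' = \{C_1, \ldots, C_k\}$ to equal $l(C_i) \le 2 L^{\ast}$, yielding $\max_v L(\mathcal{W}', v) \le 2 \max_v L(\mathcal{W}, v)$. The main technical obstacle is making the snapshot argument rigorous for continuous-time timed walks when $I$ begins or ends in the middle of an edge traversal or of a holding period; this is a routine matter, resolved either by aligning $T_0$ with a common event time of the walks, or by extending $I$ by at most one edge's worth of travel, neither of which disturbs the factor of two.
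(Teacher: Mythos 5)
Your proof is correct; the paper itself omits a proof of this lemma and defers to Lemma 2 of the cited Chevaleyre reference, and your snapshot-and-shortcut argument (every vertex is visited in a window of length $L^{\ast}$, partition by visiting walk, shortcut each walk's window segment into a cycle, pay a second $L^{\ast}$ to close it) is precisely the standard argument behind that result. The boundary issue you flag at the end is in fact a non-issue, since the argument only uses the first-visit instants $t_1<\cdots<t_m$ and the bound $d(w_j,w_{j+1})\leq t_{j+1}-t_j$, which hold for any real $T_0$ regardless of whether $I$ begins or ends mid-edge.
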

%\margin{see if it can be improved using the approximation proof in MCCP paper.}
The following proposition gives the approximation factor of Algorithm~\ref{approx_latency}.
\begin{proposition}
\label{prop:approx}
Given an undirected metric graph $G=(V,E)$ with latency constraints $r(v)$ for $v\in V$, Algorithm~\ref{approx_latency} constructs $R$ walks $\W=\{W_1,W_2,\ldots, W_R\}$ such that $L(\W,v)\leq r(v)$ for all $v\in V$ and $R \leq 4\alpha\lceil\log(\rho) \rceil\Ropt$, where $\Ropt$ is the minimum number of robots required to satisfy the latency constraints and $\alpha$ is the approximation factor of MCCP.
\end{proposition}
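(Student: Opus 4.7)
The plan is to prove feasibility first, then bound the number of robots level by level, essentially absorbing one factor of $2$ into the exponential partition, another into Lemma~\ref{lem:yann}, and a third into the per-cycle robot count.

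For feasibility, I would show that if cycle $C$ receives $k=\lceil l(C)/\min_{v\in V(C)} r(v)\rceil$ equally spaced robots in line~\ref{algln:final}, then the arrival times at any vertex on $C$ are separated by exactly $l(C)/k$, which is at most $\min_{v\in V(C)} r(v)\leq r(v)$ for every $v\in V(C)$. Since the levels $V_i$ partition $V$ and each vertex lies on some cycle returned by MCCP, every latency constraint is met.

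For the approximation factor, I would fix a level $i$ and consider the restriction of an optimal $R_{\texttt{OPT}}$-robot solution $\mathcal{W}^*$ to $V_i$, obtained by projecting each walk onto the metric closure on $V_i$. This restriction still visits each $v\in V_i$ with latency at most $r(v) < r_{\min} 2^i$. Applying Lemma~\ref{lem:yann} to this restriction yields a partition of $V_i$ into at most $R_{\texttt{OPT}}$ cycles, each of length at most $2\cdot r_{\min}2^i = r_{\min}2^{i+1}$ (since one robot traverses a cycle and its latency on every vertex equals the cycle length). Hence the MCCP optimum for the instance $(V_i, r_{\min}2^{i+1})$ is at most $R_{\texttt{OPT}}$, so MCCP$(V_i, r_{\min}2^{i+1})$ returns at most $\alpha R_{\texttt{OPT}}$ cycles, each of length at most $r_{\min}2^{i+1}$.

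Finally, for any such cycle $C$, since $V(C)\subseteq V_i$ implies $\min_{v\in V(C)} r(v)\geq r_{\min}2^{i-1}$, the per-cycle robot count in line~\ref{algln:final} satisfies $\lceil l(C)/\min_{v\in V(C)} r(v)\rceil \leq \lceil r_{\min}2^{i+1}/(r_{\min}2^{i-1})\rceil = 4$. Summing the robots across the $\lceil \log_2 \rho\rceil$ levels gives $R\leq 4\alpha \lceil \log_2 \rho\rceil R_{\texttt{OPT}}$. The main obstacle I anticipate is ensuring Lemma~\ref{lem:yann} is applied correctly to $V_i$ despite $\mathcal{W}^*$ being a solution for the full graph; the projection-to-metric-closure argument is what makes this rigorous. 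A minor bookkeeping point is the edge case when $\rho$ is an exact power of $2$, which the algorithm already handles by bumping $\rho$ up by one so that the last level is nonempty.
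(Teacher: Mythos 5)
Your proposal is correct and follows essentially the same route as the paper's proof: relax/round the latencies into the dyadic levels $V_i$, invoke Lemma~\ref{lem:yann} to convert the optimal walks into at most $\Ropt$ cycles of length at most $r_{\texttt{min}}2^{i+1}$ per level (so MCCP returns at most $\alpha\Ropt$ cycles), and then bound the per-cycle robot count by $4$ using $r(v)\geq r_{\texttt{min}}2^{i-1}$. The only difference is that you spell out two steps the paper leaves implicit --- the shortcutting of the optimal walks onto the metric closure of $V_i$ before applying Lemma~\ref{lem:yann}, and the explicit $l(C)/k$ inter-arrival computation for feasibility --- which is added rigor rather than a different argument.
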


\begin{proof}
Given that $\Ropt$ robots will satisfy the latency constraints $r(v)$, they will also satisfy the relaxed constraints $\bar{r}(v)$ since $\bar{r}(v)>r(v)$. Therefore, there exists a set of at most $\Ropt$ walks $\W^*$ such that for $v\in V_i$, $L(\W^*,v)\leq r_{\texttt{min}}2^{i}$.

Using Lemma~\ref{lem:yann}, given the set $\W^*$, $\Ropt$ cycles can be constructed in $V_i$ such that the latency of each vertex in $V_i$ is at most $r_{\texttt{min}}2^{i+1}$. Hence, running an $\alpha$ approximation algorithm for Minimum Cycle Cover Problem (MCCP) on the subgraph with vertices $V_i$ and with maximum cycle length $r_{\texttt{min}}2^{i+1}$ will not return more than $\alpha \Ropt$ cycles.

Since MCCP returns cycles, equally placing $k$ robots on each cycle will reduce the latency of each vertex on that cycle by a factor of $k$. As $r(v) \geq  r_{\texttt{min}}2^{i+1}/4$ for each $v\in V_i$, we will need to place at most $4$ robots on each cycle.

Finally, since there are at most $\lceil\log \rho\rceil$ partitions $V_i$, the algorithm will return   $R \leq 4\alpha \lceil\log(\rho)\rceil \Ropt$ walks.
\end{proof}

\textbf{Runtime:} Since we run the approximation algorithm for MCCP on partitions of the graph, the runtime of Algorithm~\ref{approx_latency} is the same as that of the approximation algorithm of MCCP. That is because the runtime of MCCP is superlinear, so if $\sum |V_i| = |V|$, then $\sum |V_i|^p \leq |V|^p $ for $p\geq 1$.
\begin{remark}[Heuristic Improvements]
Instead of finding cycles using MCCP for each partition $V_i$ in line~\ref{algln:MCCP} of the algorithm, we can also equally place robots on the Traveling Salesman Tour of $V_i$ to get a feasible solution. In practice, we use both of these methods and pick the solution that gives the lower number of robots for each $V_i$. This modification can return better solutions to the problem but does not improve the approximation guarantee established in Proposition~\ref{prop:approx}.
\end{remark}
\subsection{Relation to Min Max Weighted Latency}
The approximation algorithm and analysis presented above helps in formulating an algorithm for the multi-robot version of a related problem. In~\cite{Alamdari2014}, the authors study in detail the problem of minimizing the maximum weighted latency of a graph given a single robot. Here we define the multiple robot version of the problem.
\begin{problem}[Minimizing Maximum Weighted Latency:]
\label{pbm:weighted_latency} Given a graph $G=(V,E)$ with weights $\phi(v)$ for $v\in V$, and a set of walks $\mathcal{W}$, the weighted latency of $v$ is defined as $C(\W,v)=\phi(v)L(\W,v)$. Given the number of robots $R$, the problem of minimizing maximum weighted latency is to find a set of $R$ infinite walks $\W=\{W_1,W_2,\ldots,W_R\}$ such that the cost $\max_v C(\W,v)$ is minimized.
\end{problem}
Without loss of generality, $\phi(v)$ is assumed to be normalized such that $\max_v\phi(v)=1$. In this section we relate this problem to Problem~\ref{pbm:min_robs}.

In~\cite{Alamdari2014}[Algorithm 2], an approximation algorithm for the single robot version of Problem~\ref{pbm:weighted_latency} is given. We will refer to that approximation algorithm as \textsc{MinMaxLatencyOneRobot}$(G_j)$, which returns a walk in graph $G_j$ such that the maximum weighted latency of that walk is not more than $(8\log\rho_j + 10)\texttt{OPT}_1^j$, where $\rho_j$ is the ratio of maximum to minimum vertex weights in $G_j$ and $\texttt{OPT}_1^j$ is the optimal maximum weighted latency for one robot in $G_j$. We use this approximation algorithm as a subroutine in Algorithm~\ref{multiple_latency} for minimizing the maximum weighted latency with multiple robots.

\begin{algorithm}
\openup -.1em
	\caption{\textsc{LatencyWalks}}
	\label{multiple_latency}
	\begin{algorithmic}[1]
	\Statex Input: Graph $G=(V,E)$, vertex weights $\phi(v), \forall v\in V$, and number of robots $R$.
	\Statex Output: A set of $R$ walks $\{W_1,\ldots,W_R\}$ in $G$.
  \vspace{0.2em}
  \hrule
  \vspace{0.2em}
	\State $\rho = \max_{i,j} \phi_i/\phi_j$
	\If {$\max_{i,j} \phi_i/\phi_j$ is an exact power of $2$} $\rho = \max_{i,j} \phi_i/\phi_j+1$ \EndIf
	\State Let $V_{i}$ be the set of vertices of weight $\frac{1}{2^i} < \phi(u) \leq \frac{1}{2^{i-1}} $  for $1\leq i \leq \lceil \log_2 \rho \rceil $
	\If {$R<\log\rho$}
	\For {$j=1$ to $R$}
	\State Let $G_j$ be a subgraph of $G$ with vertices $V_i$ for \hspace*{8.5mm} $\lceil\frac{j-1}{R}\log\rho\rceil +1 \leq  i\leq \lceil\frac{j}{R}\log\rho\rceil$\label{algln:Gj}
	\State $W_j = $ \textsc{MinMaxLatencyOneRobot}$(G_j)$
	\EndFor
	\EndIf
	\If {$R \geq \log\rho$}
	\State Equally space $\lfloor R/\lceil\log\rho\rceil \rfloor$ robots on TSP tour of $V_i$ \hspace*{5.5mm}for all $i$ to get  $\{W_1,\ldots,W_{ \lceil\log\rho\rceil \lfloor \frac{R}{\lceil\log\rho\rceil }\rfloor}\}$
	\For {$k=R-\lceil\log\rho\rceil \lfloor \frac{R}{\lceil\log\rho\rceil }\rfloor+1$ to $R$}
	\State Find subset $V_i$ that has the maximum cost with \hspace*{10.5mm}currently assigned robots
	\State Equally space all the robots on $V_i$ along with \hspace*{10.5mm}robot $k$ to get $W_k$
	\EndFor
	\EndIf
	\end{algorithmic}
\end{algorithm}

\begin{proposition}
Given an instance of Problem~\ref{pbm:weighted_latency}, Algorithm~\ref{multiple_latency} constructs $R$ walks such that the maximum weighted latency of the graph is not more than $(\frac{8\log\rho}{R}+10)\texttt{OPT}_1$ if $R\leq \log\rho$ and $3\texttt{OPT}_1/\lfloor R/\lceil\log\rho\rceil \rfloor$ otherwise, where $\rho=\max \frac{\phi (v_i)}{\phi (v_j)}\ $ and $\texttt{OPT}_1$ is the maximum weighted latency of the single optimal walk.
\end{proposition}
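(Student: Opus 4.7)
The plan is to analyze Algorithm~\ref{multiple_latency} in its two operating regimes separately, since they invoke distinct primitives. In the regime $R \leq \log\rho$, each of the $R$ robots is assigned to its own subgraph $G_j$ via \textsc{MinMaxLatencyOneRobot}, and the bound reduces to two subsidiary inequalities. In the regime $R \geq \log\rho$, robots are distributed evenly on TSP tours of each level $V_i$, and the bound comes from a geometric relationship between $\texttt{OPT}_1$ and the TSP length of each level.

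For the first regime I would first observe that since the subgraphs $G_j$ partition $V$, each vertex is visited by exactly one walk $W_j$, so the overall maximum weighted latency equals the maximum over $j$ of the weighted latency of $W_j$ within $G_j$. Two short arguments then control the bound. (i) By the level assignment in line~\ref{algln:Gj}, $G_j$ spans $\lceil\tfrac{j}{R}\log\rho\rceil - \lceil\tfrac{j-1}{R}\log\rho\rceil \leq \lceil\log\rho/R\rceil$ consecutive levels, and since consecutive levels differ in weight by at most a factor of $2$, the ratio $\rho_j$ inside $G_j$ satisfies $\log\rho_j \leq \log\rho/R$ up to rounding of level boundaries. (ii) The single-robot optimum $\texttt{OPT}_1^j$ on $G_j$ is at most $\texttt{OPT}_1$, because the restriction of the optimal walk on $G$ to $V(G_j)$ via metric shortcutting is a feasible walk on $G_j$ whose latency at each vertex of $V(G_j)$ does not increase, hence whose weighted latency is at most $\texttt{OPT}_1$. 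Substituting these two facts into the $(8\log\rho_j + 10)\texttt{OPT}_1^j$ guarantee of \textsc{MinMaxLatencyOneRobot} yields the stated $(\tfrac{8\log\rho}{R}+10)\texttt{OPT}_1$ bound.

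For the second regime, set $k = \lfloor R/\lceil\log\rho\rceil\rfloor$. After the initial placement, $k$ equally spaced robots on the TSP tour of $V_i$ give each $v \in V_i$ latency at most $\mathrm{tour}(V_i)/k$, and the subsequent greedy additions of the remaining $R - k\lceil\log\rho\rceil$ robots to the currently worst $V_i$ can only decrease the maximum weighted latency. The heart of the proof is the inequality $\phi(v)\,\mathrm{tour}(V_i) \leq 3\,\texttt{OPT}_1$ for every $v \in V_i$. I would prove it by taking the single-robot optimum $W^*$ on $G$ and letting $v^* \in V_i$ maximize $L(W^*,\cdot)$ within $V_i$. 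Between two consecutive visits of $W^*$ to $v^*$, every $v' \in V_i$ must also be visited, for otherwise the gap at $v'$ would exceed $L(W^*,v^*) \geq L(W^*,v')$; this produces a closed sub-walk of $W^*$ through all of $V_i$ of length at most $L(W^*,v^*) \leq \texttt{OPT}_1/\phi(v^*) < 2^i\,\texttt{OPT}_1$, so $TSP(V_i) < 2^i\,\texttt{OPT}_1$. Using a $3/2$-approximate TSP (Christofides) and $\phi(v) \leq 2^{-(i-1)}$ then gives $\phi(v)\,\mathrm{tour}(V_i) \leq 3\,\texttt{OPT}_1$, completing the $3\,\texttt{OPT}_1/k$ bound.

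The main obstacle is the visit-containment step for the TSP inequality: care is needed in the limiting case where the supremum defining $L(W^*,v^*)$ is not attained, handled either by working with windows of length $L(W^*,v^*)+\varepsilon$ and letting $\varepsilon \to 0$, or by using the existence of a periodic optimal $W^*$. The first regime is more mechanical once the bounds on $\log\rho_j$ and $\texttt{OPT}_1^j$ are in hand; minor constant slack from ceilings in the level indexing can be absorbed into the additive term.
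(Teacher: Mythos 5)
Your proof is correct and follows essentially the same route as the paper: in the first regime you bound $\rho_j$ by $2^{\log\rho/R}$ and use $\texttt{OPT}_1^j \le \texttt{OPT}_1$ (which you, unlike the paper, justify via metric shortcutting), and in the second regime you combine the factor-$2$ weight spread within a level with the $3/2$ TSP approximation to get $3\texttt{OPT}_1/\lfloor R/\lceil\log\rho\rceil \rfloor$. The only substantive difference is that you derive the key inequality $\mathrm{TSP}(V_i) \le \texttt{OPT}_1/\phi(v^*)$ from first principles via the visit-containment window argument (correctly flagging the case where the supremum is not attained), whereas the paper simply invokes the known optimality of the TSP tour under uniform vertex weights; both versions carry the same minor slack from the ceilings in the level indexing, which you at least acknowledge.
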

\begin{proof}
The maximum vertex weight in the subgraph $G_j$ constructed at line~\ref{algln:Gj} of the algorithm will be at most $1/(2^{\frac{j-1}{R}\log\rho})$, whereas the minimum vertex weight in $G_j$ will be at least $1/(2^{\frac{j}{R}\log\rho})$. Hence the ratio of the maximum to minimum vertex weights in $G_j$ will be at most $\rho_j = 2^{\frac{\log\rho}{R}}$. Therefore, the approximation algorithm for one robot will return a walk $W_j$ such that the maximum weighted latency of $W_j$ will not be more than $(8\log\rho_j + 10)\texttt{OPT}_1^j$. Moreover, $\texttt{OPT}_1^j \leq \texttt{OPT}_1$ and hence if $R<\log \rho$, the maximum weighted latency will be at most $(\frac{8\log\rho}{R} + 10)\texttt{OPT}_1$.

The TSP tour of $V_i$ is an optimal solution when all the vertex weights in $V_i$ are equal. Since the vertex weights within $V_i$ differ by a factor of $2$ at most, and the approximation factor of TSP tour in metric graphs is $3/2$, the maximum weighted latency of the TSP tour will be at most $3 \texttt{OPT}_1$. Equally placing $\lfloor R/\lceil\log\rho\rceil \rfloor$ will decrease the latency of all the vertices by a factor of $\lfloor R/\lceil\log\rho\rceil \rfloor$.
\end{proof}
Note that Algorithm~\ref{multiple_latency} bounds the cost of the solution by a function of the optimal cost of a single robot. This algorithm shows that $R$ robots can asymptotically decrease the weighted latency given by a single walk by a factor of $R$, which is not straightforward for this problem as discussed in Section~\ref{sec:factor_R}.

Now we show that if there is an approximation algorithm for Problem~\ref{pbm:weighted_latency}, it can be used to solve Problem~\ref{pbm:min_robs} using the optimal number of robots but with the latency constraints relaxed by a factor $\alpha$. This is referred to as a $(\alpha,1)$-bi-criterion algorithm~\cite{iyer2013submodular} for Problem~\ref{pbm:min_robs}.
\begin{proposition}\label{prop:bicriterion}
If there exists an $\alpha$-approximation algorithm for Problem~\ref{pbm:weighted_latency}, then there exists a $(\alpha,1)$-bi-criterion approximation algorithm for Problem~\ref{pbm:min_robs}.
\end{proposition}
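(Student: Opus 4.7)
The plan is to reduce Problem~\ref{pbm:min_robs} to Problem~\ref{pbm:weighted_latency} by a careful choice of vertex weights, then use the hypothetical $\alpha$-approximation as a feasibility oracle over the number of robots. Set $\phi(v) = r_{\texttt{min}}/r(v)$ for every $v \in V$, where $r_{\texttt{min}} = \min_v r(v)$. This is normalized so that $\max_v \phi(v) = 1$ as required by Problem~\ref{pbm:weighted_latency}, and the weighted latency of any set of walks $\mathcal{W}$ at vertex $v$ is $\phi(v) L(\mathcal{W},v) = r_{\texttt{min}} L(\mathcal{W},v)/r(v)$. The key observation is that $\max_v \phi(v) L(\mathcal{W},v) \leq c$ is equivalent to $L(\mathcal{W},v) \leq (c/r_{\texttt{min}}) r(v)$ holding pointwise for every $v$.

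The reduction proceeds by trying candidate fleet sizes $R = 1, 2, 3, \ldots$ in increasing order (a bound of $R \leq |V|$ clearly suffices to guarantee termination, since each vertex can be assigned its own robot). For each $R$, invoke the assumed $\alpha$-approximation for Problem~\ref{pbm:weighted_latency} on the weighted instance $(G, \phi)$ with $R$ robots. Return the walks produced at the smallest $R$ for which the resulting walks satisfy $L(\mathcal{W}, v) \leq \alpha\, r(v)$ for every $v$.

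For correctness, consider the iteration with $R = \Ropt$. By definition of $\Ropt$, there exist walks $\mathcal{W}^*$ with $L(\mathcal{W}^*, v) \leq r(v)$ for every $v$, which means $\max_v \phi(v) L(\mathcal{W}^*, v) \leq r_{\texttt{min}}$. Hence the optimal value of Problem~\ref{pbm:weighted_latency} with $\Ropt$ robots is at most $r_{\texttt{min}}$, and the $\alpha$-approximation must return a set of $\Ropt$ walks whose max weighted latency is at most $\alpha\, r_{\texttt{min}}$. Translating back through the weights, this gives $L(\mathcal{W}, v) \leq \alpha\, r(v)$ for every $v$, so the procedure halts at some $R \leq \Ropt$ with walks that violate each latency bound by at most a factor of $\alpha$, matching the $(\alpha,1)$-bi-criterion guarantee.

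The main subtlety is getting the normalization right so that the $\alpha$-approximation guarantee, which is multiplicative on the single scalar objective $\max_v \phi(v) L(\mathcal{W},v)$, translates into a per-vertex multiplicative relaxation of the latency constraints; the choice $\phi(v) \propto 1/r(v)$ makes this translation exact, which is the one non-routine step. The remaining ingredient is just the monotonicity argument that allows $\Ropt$ to be found by a linear search over $R$, using the hypothetical approximation in a black-box fashion.
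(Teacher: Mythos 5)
Your proposal is correct and follows essentially the same route as the paper: the same weight assignment $\phi(v)=r_{\texttt{min}}/r(v)$, the same equivalence between ``max weighted latency at most $c$'' and the pointwise bound $L(\mathcal{W},v)\leq (c/r_{\texttt{min}})r(v)$ (which the paper isolates as a separate lemma), and the same use of the $\alpha$-approximation as a feasibility oracle while searching over the fleet size $R$. The only difference is that you use a linear search over $R$ where the paper invokes binary search; both yield some $R\leq\Ropt$ with the $\alpha$-relaxed latency guarantee, so the bi-criterion bound is unaffected.
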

We will need the following lemma relating the two problems to prove the proposition. Given an instance of the decision version of Problem~\ref{pbm:min_robs} with $R$ robots, let us define an instance of Problem~\ref{pbm:weighted_latency} by assigning $\phi(v)= \frac{r_{\texttt{min}}}{r(v)}, \forall v\in V$, where $r_{\texttt{min}}=\min_v r(v)$.
\begin{lemma}
\label{lem:walk}
 An instance of the decision version of Problem~\ref{pbm:min_robs} is feasible if and only if the optimal maximum weighted latency is at most $r_{\texttt{min}}$ for the corresponding instance of Problem~\ref{pbm:weighted_latency}.
\end{lemma}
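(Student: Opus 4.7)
The plan is to prove both directions of the biconditional by a direct algebraic substitution that exploits the definition $\phi(v)=r_{\texttt{min}}/r(v)$. The lemma is essentially a reformulation rather than a deep structural result, so there is no combinatorial obstacle to anticipate; the only care needed is in handling the supremum over $v$ on each side and in making sure the inequalities are strict/non-strict in a matching way.

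For the forward direction, I would start with a feasible set of walks $\W=\{W_1,\dots,W_R\}$ for the decision version of Problem~\ref{pbm:min_robs}, so that $L(\W,v)\leq r(v)$ for every $v\in V$. Multiplying both sides by $\phi(v)=r_{\texttt{min}}/r(v)\geq 0$ yields
\begin{equation*}
C(\W,v)=\phi(v)L(\W,v)\leq \phi(v)r(v)=r_{\texttt{min}}
\end{equation*}
for all $v$, hence $\max_v C(\W,v)\leq r_{\texttt{min}}$. This certifies that the optimum of the corresponding instance of Problem~\ref{pbm:weighted_latency} with $R$ robots is at most $r_{\texttt{min}}$.

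For the converse, I would assume the optimal maximum weighted latency on $R$ robots is at most $r_{\texttt{min}}$ and let $\W$ be a witness set of $R$ walks. Then for each $v\in V$,
\begin{equation*}
\phi(v)L(\W,v)=C(\W,v)\leq r_{\texttt{min}},
\end{equation*}
and dividing by $\phi(v)>0$ gives $L(\W,v)\leq r_{\texttt{min}}/\phi(v)=r(v)$, so the same $\W$ is feasible for Problem~\ref{pbm:min_robs}. The main thing to flag is that $\phi(v)$ is strictly positive because $r(v)$ is a finite positive latency bound, so the division is legitimate; otherwise the argument is a one-line identity in each direction.
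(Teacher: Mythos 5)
Your proof is correct and follows essentially the same approach as the paper: both directions reduce to multiplying or dividing the inequality $L(\W,v)\leq r(v)$ by $\phi(v)=r_{\texttt{min}}/r(v)$. The only cosmetic difference is that you prove the forward direction directly while the paper states its contrapositive; the algebra is identical.
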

\begin{proof}
If the optimal set of walks $\W$ has a cost more than $r_{\texttt{min}}$, then $L(\W,v) > r_{\texttt{min}}/\phi(v) =r(v)$ for some vertex $v$. Hence the latency constraint for that vertex is not satisfied and the set of walks $\W$ is not feasible.

If the optimal set of walks $\W$ has a cost at most $r_{\texttt{min}}$, then $ L(\W,v)\phi(v) \leq r_{\texttt{min}}$ for all $v$. Hence, $L(\W,v) \leq r_{\texttt{min}}/\phi(v) = r(v)$. So, the latency constraints are satisfied for all vertices and $\W$ is feasible.
\end{proof}
%The following lemma shows that if we are constrained by the number of robots $R$, then an $\alpha$-approximation algorithm for weighted latency problem can give $R$ walks such that each of the latency constrains is violated by at most a factor of $\alpha$.
%The following proposition shows that if there is an approximation algorithm for Problem~\ref{pbm:weighted_latency}, it can be used for a bi-criterion approximation scheme for Problem~\ref{pbm:min_robs}.

\begin{proof}[Proof of Proposition~\ref{prop:bicriterion}]
If a problem instance of Problem~\ref{pbm:min_robs} with $R$ robots is feasible, then by Lemma~\ref{lem:walk} the optimal set of walks has a cost at most $r_{\texttt{min}}$. The $\alpha$-approximation algorithm for the corresponding Problem~\ref{pbm:weighted_latency} will return a set of walks $\W$ with a cost no more than $\alpha r_{\texttt{min}}$. Hence, $L(\W,v)\leq \alpha r_{\texttt{min}}/\phi(v) =\alpha r(v)$, for all $v$.

Hence, we can use binary search to find the minimum number of robots for which the $\alpha$-approximation algorithm for the corresponding Problem~\ref{pbm:weighted_latency} results in a latency at most $\alpha r(v)$ for all $v$. This will be the minimum number of robots for which the problem is feasible.
\end{proof}

\section{Heuristic Algorithms}
\label{sec:greedy}
The approximation algorithm for Problem~\ref{pbm:min_robs} presented in Section~\ref{sec:approx} is guaranteed to provide a solution within a fixed factor of the optimal solution. In this section, we propose a heuristic algorithm based on the orienteering problem, which in practice provides high-quality solutions.
\subsection{Partitioned Solutions}
In general, walks in a solution of the problem may share some of the vertices. However, sharing the vertices by multiple robots requires coordination and communication among the robots. Such strategies may also require the robots to hold at certain vertices for some time before traversing the next edge, in order to maintain synchronization. This is not possible for vehicles that must maintain forward motion, such as fixed-wing aircraft. The following example illustrates that if vertices are shared by the robots, lack of coordination or perturbation in edge weights can lead to large errors in latencies.\\
\textbf{Example:} Consider the problem instance shown in Figure~\ref{fig:example2}. An optimal set of walks for this problem is given by $\{W_1,W_2,W_3\}$ where $W_1 = ((a,1),(b,1))$, $W_2=((b,0),(c,0))$ and $W_3 = ((c,0),(d,1),(c,1))$. Note that walk $W_1$ starts by staying on vertex $a$, while $W_2$ leaves vertex $b$ and $W_3$ leaves vertex $c$. Also note that any partitioned solution will need $4$ robots. Moreover, if the length of edge $\{b,c\}$ changes from $3$ to $3-\epsilon$, (e.g., if the robot's speed increases slightly) the latencies of vertices $b$ and $c$ will keep changing with time and will go up to $5$. Hence, a small deviation in robot speed can result in a large impact on the monitoring objective.
\begin{figure}[h]
\centering
\includegraphics[width=.6\linewidth]{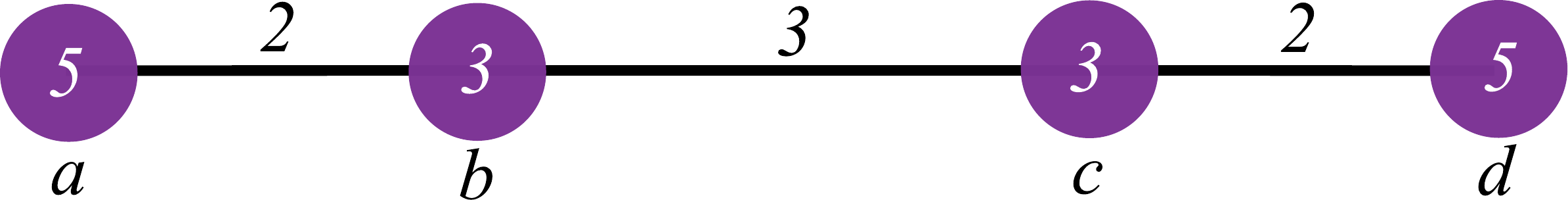}
\caption{A problem instance with an optimal set of walks that share vertices. The latency constraints for each vertex are written inside that vertex. The edge lengths are labeled with the edges. The optimal walks are $\{W_1,W_2,W_3\}$ where $W_1 = ((a,1),(b,1))$, $W_2=((b,0),(c,0))$ and $W_3 = ((c,0),(d,1),(c,1))$.}
\label{fig:example2}
\end{figure}

Since the above mentioned issues will not occur if the robots do not share the vertices of the graph, and the problem is PSPACE-complete even for a single robot, we focus on finding partitioned walks in this section. {The general greedy approach used in this section is to find a single walk that satisfies latency constraints on a subset of vertices $V'\subseteq V$. Note that we do not know $V'$ beforehand, but a feasible walk on a subset of vertices will determine $V'$. We then repeat this process of finding feasible walks on the remaining vertices of the graph until the whole graph is covered.

\subsection{Greedy Algorithm}
\label{sec:simple_greedy}
We now consider the problem of finding a single walk on the graph $G=(V,E)$ that satisfies the latency constraints on the vertices in $V'\subseteq V$. Given a robot walking on a graph, let $p(k)$ represent the vertex occupied by the robot after traversing $k$ edges (after $k$ steps) of the walk. Also, at step $k$, let the maximum time left until a vertex $i$ has to be visited by the robot for its latency to be satisfied be represented by $s_i(k)$. If that vertex is not visited by the robot within that time, we say that the vertex expired. Hence, the vector $s(k)=[s_1(k),\ldots,s_{|V'|}(k)]^T$ represents the time to expiry for each vertex. At the start of the walk, $s_i(0)=r(i)$, and $s_i(k)$ evolves according to the following equation:
\begin{equation}
\label{eq:slackupadate}
s_i(k) = \begin{cases}
r(i) \quad  &\quad \text{if } p(k)=i \\
s_i(k-1)- l(p(k-1),p(k)) & \quad \text{otherwise}. \\
\end{cases}
\end{equation}

We will use the notation $s_i$ without the step $k$ if it is clear that we are talking about the current time to expiry. An incomplete greedy heuristic for the decision version of the problem with $R=1$ is presented in~\cite{las2013persistent}. The heuristic is to pick the vertex with minimum value of $s_i(k)$ as the next vertex to be visited by the robot. This heuristic does not ensure that all the vertices on the walk will have their latency constraints satisfied since the distance to a vertex $i$ to be visited might get larger than $s_i(k)$. To overcome this, we propose a modification to the heuristic to apply it to our problem. Given a walk $W$ on graph $G$, the function \textsc{PeriodicFeasibility$(W,G)$} determines whether the periodic walk $\Delta(W)$ is feasible on the vertices that are visited by $W$. This can be done simply in $O(|W|)$ by traversing the walk $[W, W]$ and checking if the time to expiry for any of the visited vertices becomes negative. Given this function, the simple greedy algorithm is to pick the vertex $i=\argmin\{s_j\} $ subject to the constraint that \textsc{PeriodicFeasibility$([W , i],G)$} returns true, where $W$ is the walk traversed so far. The algorithm terminates when all the vertices are either expired, or covered by the walk.

\subsection{Orienteering Based Greedy Algorithm}
Algorithm~\ref{alg:greedy} also finds partitioned walks by finding a feasible walk on a subset of vertices and then considering the remaining subgraph. The idea is to visit more vertices on the way to the greedily picked vertex. From the current vertex $x$, the target vertex $y$ is picked greedily as described in Section~\ref{sec:simple_greedy}. Then the time $d$ is calculated in line~\ref{algln:dist} which is the maximum time to go from $x$ to $y$ for which the periodic walk remains feasible. In line~\ref{algln:app_walk}, \textsc{Orienteering}$(V-V_{\texttt{exp}},x,y,d,\psi)$ finds a path in the vertices $V-V_{\texttt{exp
}}$ from $x$ to $y$ of length at most $d$ maximizing the sum of the weights $\psi$ on the vertices of the path. The set $V_{\texttt{exp
}}$ represents the expired vertices whose latencies cannot be satisfied by the current walk, and they will be considered by the next robot. The vertices with less time to expiry are given more importance in the path by setting weight $\psi_i=1/s_i$ for vertex $i$. The vertices that are already in the walk will remain feasible, and so their weight is discounted by a small number $m$ to encourage the path to explore unvisited vertices.

\begin{algorithm}[h]
	\openup -.1em
	\caption{\textsc{OrienteeringGreedy}}
	\label{alg:greedy}
	\begin{algorithmic}[1]
	\Statex Input: Graph $G=(V,E)$, latency constraints $r(v), \forall v$
	\Statex Output: A set of walks $\W$, such that $L(\W,v)\leq r(v)$
  \vspace{0.2em}
  \hrule
  \vspace{0.2em}
	\State $j=1$, $\W = \{\}$
	\While {$V$ is not empty}
	\State $V_{\texttt{exp}}=\{\}$
	\State $s_i = r(i)$ for all $i\in V$
	\State $W_j = \text{pick vertex } a \text{ randomly from } V $
	\While {$V-V(W_j)-V_{\texttt{exp}}$ is not empty}\label{algln:while}
	\State $x = $ last vertex in $W_j$
	\For {$y\in V-V_{\texttt{exp}}$ in increasing order of $s$}
	\If {\textsc{PeriodicFeasibility$([W_j , y],G)$}}
	\State Use binary search between $l(x,y)$ and $s_y$ \hspace*{21mm}to get $d$ (time to go from $x$ to $y$) such that \hspace*{21mm}$[W_j,y]$ remains feasible\label{algln:dist}
	\For {$z$ in $V-(V_{\texttt{exp}}\cup V(W_j))$}
	\If {$s_z < d + l(y,a)$} $V_{\texttt{exp}}=\hspace*{25mm}V_{\texttt{exp}}\cup z$\EndIf\label{algln:newif}
	\EndFor
	\State $\psi_i=1/s_i$ for all non expired vertices $i$
	\State $\psi_i = m \psi_i$ for $i$ in $V(W_j)$
	\State $W_j = [W_j, $\textsc{Orienteering}$(V-\hspace*{21mm}V_{\texttt{exp}},x,y,d,\psi)]$\label{algln:app_walk}
	\State Update $s$ using Equation~(\ref{eq:slackupadate})
	%\State Go to line~\ref{algln:while}
	\Else
	\State $V_{\texttt{exp}}=V_{\texttt{exp}}\cup y$
	\EndIf
	\EndFor
	\EndWhile
	\State $\W = \{\W,W_j\}$, $j = j+1$\label{algln:sol_set}
	\State $V = V-V(W_j)$\label{algln:reduce_size}
	\EndWhile
	\end{algorithmic}
\end{algorithm}

\begin{lemma}
Algorithm~\ref{alg:greedy} returns a feasible solution, i.e., for the set of walks $\W$ returned by Algorithm~\ref{alg:greedy}, $L(\W,v)\leq r(v)$, for all $v\in V$.
\end{lemma}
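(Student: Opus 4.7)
The plan is to decompose the feasibility claim into two sub-claims: (i) the outer while loop terminates and the returned set $\W$ partitions $V$; (ii) every walk $W_j \in \W$ satisfies $L(\Delta(W_j), v) \leq r(v)$ for all $v \in V(W_j)$. Because the walks are vertex-disjoint by construction, the latency of $\W$ at any vertex $v$ coincides with the latency of the unique walk containing $v$, so these two sub-claims together give $L(\W, v) \leq r(v)$ for every $v \in V$.

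For sub-claim (i), I would analyze the bookkeeping of the outer loop directly. Each iteration selects a starting vertex $a \in V$, so $a \in V(W_j)$ and $|V(W_j)| \geq 1$. The update $V \leftarrow V - V(W_j)$ in line~\ref{algln:reduce_size} therefore strictly shrinks $V$ in every iteration and guarantees $V(W_j) \cap V(W_k) = \emptyset$ whenever $j \neq k$, yielding both termination and partition. The inner while loop terminates because each pass either extends $W_j$ (removing at least one vertex from $V - V(W_j) - V_{\texttt{exp}}$) or inserts a new vertex into $V_{\texttt{exp}}$; since the monotone set $V - V(W_j) - V_{\texttt{exp}}$ is finite, the loop must exit.

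The heart of the proof is sub-claim (ii), which I would attack by induction over the iterations of the inner loop with the invariant that \textsc{PeriodicFeasibility}$(W_j, G)$ is always true. The base case $W_j = (a)$ is immediate. In the inductive step, $W_j$ is only modified inside the branch guarded by a successful \textsc{PeriodicFeasibility}$([W_j, y], G)$, and the binary search in line~\ref{algln:dist} returns the largest $d$ for which extending $W_j$ with a single traversal of duration $d$ to $y$ still preserves periodic feasibility. The orienteering call then returns a path from $x$ to $y$ of length at most $d$. The main obstacle is verifying that the intermediate vertices supplied by orienteering cannot spoil the invariant, since the \textsc{PeriodicFeasibility} check is performed before the orienteering substitution rather than after it. This is exactly the purpose of the pruning step in line~\ref{algln:newif}: before orienteering is invoked, every candidate $z$ with current slack $s_z < d + l(y, a)$ is moved into $V_{\texttt{exp}}$, so the returned sub-path may only traverse vertices whose slack is large enough to accommodate reaching $y$ in time $d$ and then closing the period back to the starting vertex $a$ via the edge of length $l(y, a)$. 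Combined with the update of $s$ via Equation~(\ref{eq:slackupadate}) after the concatenation, this preserves periodic feasibility; when the inner loop finally exits, $W_j$ is periodically feasible on $V(W_j)$, yielding sub-claim (ii) and completing the proof.
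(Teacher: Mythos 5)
Your proof is correct and follows essentially the same route as the paper's: reduce to per-walk feasibility via the vertex-disjointness enforced in line~\ref{algln:reduce_size}, then induct on the inner loop, using the binary search of line~\ref{algln:dist} to protect vertices already on the walk and the pruning of line~\ref{algln:newif} (with threshold $d+l(y,a)$) to protect the new vertices introduced by the orienteering call. The only substantive difference is that you also argue termination of the loops, which the paper leaves implicit, while the paper spells out the final arithmetic $r(z)=l(W_j^-)+s_z\geq l(W_j^-)+d+l(y,a)\geq L(W_j^+,z)$ for a newly added vertex $z$ that you state only informally.
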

\begin{proof}
The vertices covered by the walk $W_j$ added to the solution in line~\ref{algln:sol_set} are removed from the set of vertices before finding the rest of the walks. Hence the latencies of the vertices $V(W_j)$ will be satisfied by only $W_j$. We will show that every time $W_j$ is appended in line~\ref{algln:app_walk}, it remains feasible on $V(W_j)$.

$W_j$ starts from a single vertex $a$, and hence is feasible at the start. Let us denote $W_j^-$ as the walk before line~\ref{algln:app_walk} and $W_j^+$ as the walk after line~\ref{algln:app_walk}. Due to line~\ref{algln:dist}, if $W_j^-$ is feasible in a particular iteration, then $[W_j^-,y]$ will remain feasible. Hence the only vertices than can possibly have their latency constraints violated in $W_j^+$ are in the orienteering path from $x$ to $y$. Consider any vertex $z$ in the path from $x$ to $y$ returned by the \textsc{Orienteering} function in line~\ref{algln:app_walk}. If $z\in V(W_j^-)$, then $L(W_j^+,z)\leq r(z)$ because of line~\ref{algln:dist}. If $z\notin V(W_j^-)$, then $r(z)=l(W_j^-) + s_z$ and by line~\ref{algln:newif}, $r(z)\geq l(W_j^-) + d + l(y,a)$. As $z$ is only visited once in $W_j^+$, $ L(W_j^+,z) = l(W_j^+) \leq l(W_j^-)+d+l(y,a)  \leq r(z)$.
\end{proof}

An approximation algorithm for \textsc{Orienteering} can be used in line~\ref{algln:app_walk} of Algorithm~\ref{alg:greedy}. In our implementation, we used an ILP formulation to solve \textsc{Orienteering}. To improve the runtime in practice, we pre-process the graph before calling the \textsc{Orienteering} solver to consider only the vertices $z$ such that $l(x,z)+l(z,y)\leq d$. We show in the next section that although the runtime of Algorithm~\ref{alg:greedy} is more than that of Algorithm~\ref{approx_latency}, it can still solve instances with up to $90$ vertices in a reasonable amount of time, and it finds better solutions.
\section{Simulation Results}
\label{sec:sim}
We now present the performance of the algorithms presented in the paper. For the approximation algorithm, we used the LKH implementation~\cite{helsgaun2000effective} to find the TSP of the graphs instead of the Christofides approximation algorithm~\cite{christofides1976worst}. This results in the loss of approximation guarantee but gives better results in practice. The orienteering paths in Algorithm~\ref{alg:greedy} were found using the ILP formulation from~\cite{letchford2013compact} and the ILP's were solved using the Gurobi solver~\cite{gurobi}.
\subsection{Patrolling an Environment}
The graphs for the problem instances were generated randomly in a real world environment. The scenario represents a ground robot monitoring the University of Waterloo campus. Vertices around the campus buildings represent the locations to be monitored and a complete weighted graph was created by generating a probabilistic road-map to find paths between those vertices. Figure~\ref{fig:environment} shows the patrolling environment. To generate random problem instances of different sizes, $n$ random vertices were chosen from the original graph. The latency constraints were generated uniformly randomly between $\text{TSP}/k$ and $k\text{TSP}$ where $k$ was chosen randomly between $4$ and $8$ for each instance. Here $\text{TSP}$ represents the TSP length of the graph found using LKH.

For each graph size, $10$ random instances were created. The average run times of the algorithms are presented in Figure~\ref{fig:runtimes}. As expected, Algorithm~\ref{alg:greedy} is considerably slower than the approximation and simple greedy algorithms due to multiple calls to the ILP solver. However, as shown in Figure~\ref{fig:obj_value}, Algorithm~\ref{alg:greedy} also gives the minimum number of robots for most of these instances. %A similar trend was observed when latency constraints were exponentially distributed between $\text{TSP}/4$ and $\infty$ with mean $3\text{TSP}$.
\begin{figure}[t]
\centering
\includegraphics[width=.5\linewidth, angle=90]{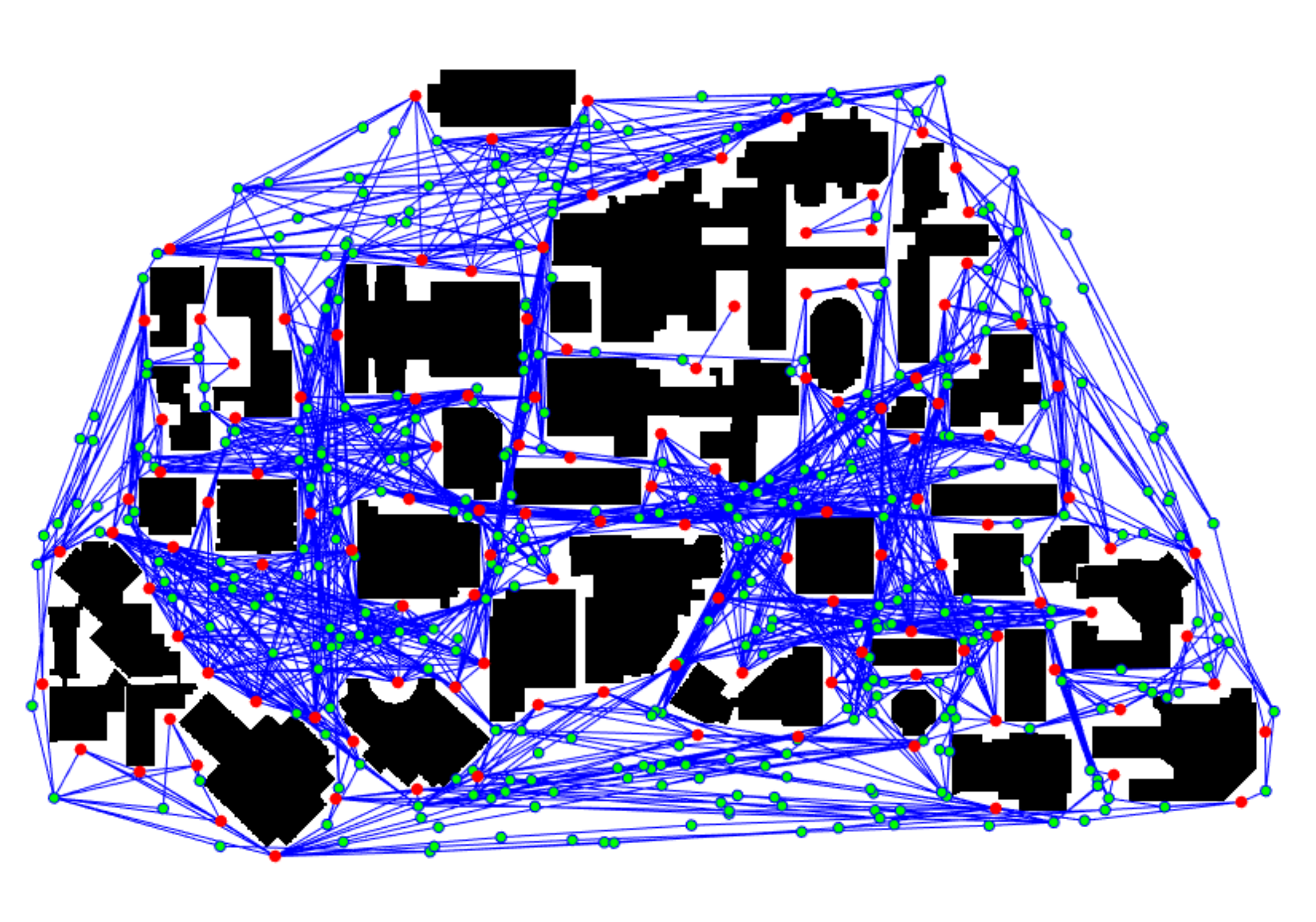}
\caption{The environment used for generation of random instances. The red dots represent the vertices to be monitored and the green dots represent the vertices in the PRM used to find shortest paths between red vertices.}
\label{fig:environment}
\end{figure}
\begin{figure}[t]
\centering
\begin{subfigure}{.9\textwidth}
\centering
\includegraphics[width=\linewidth]{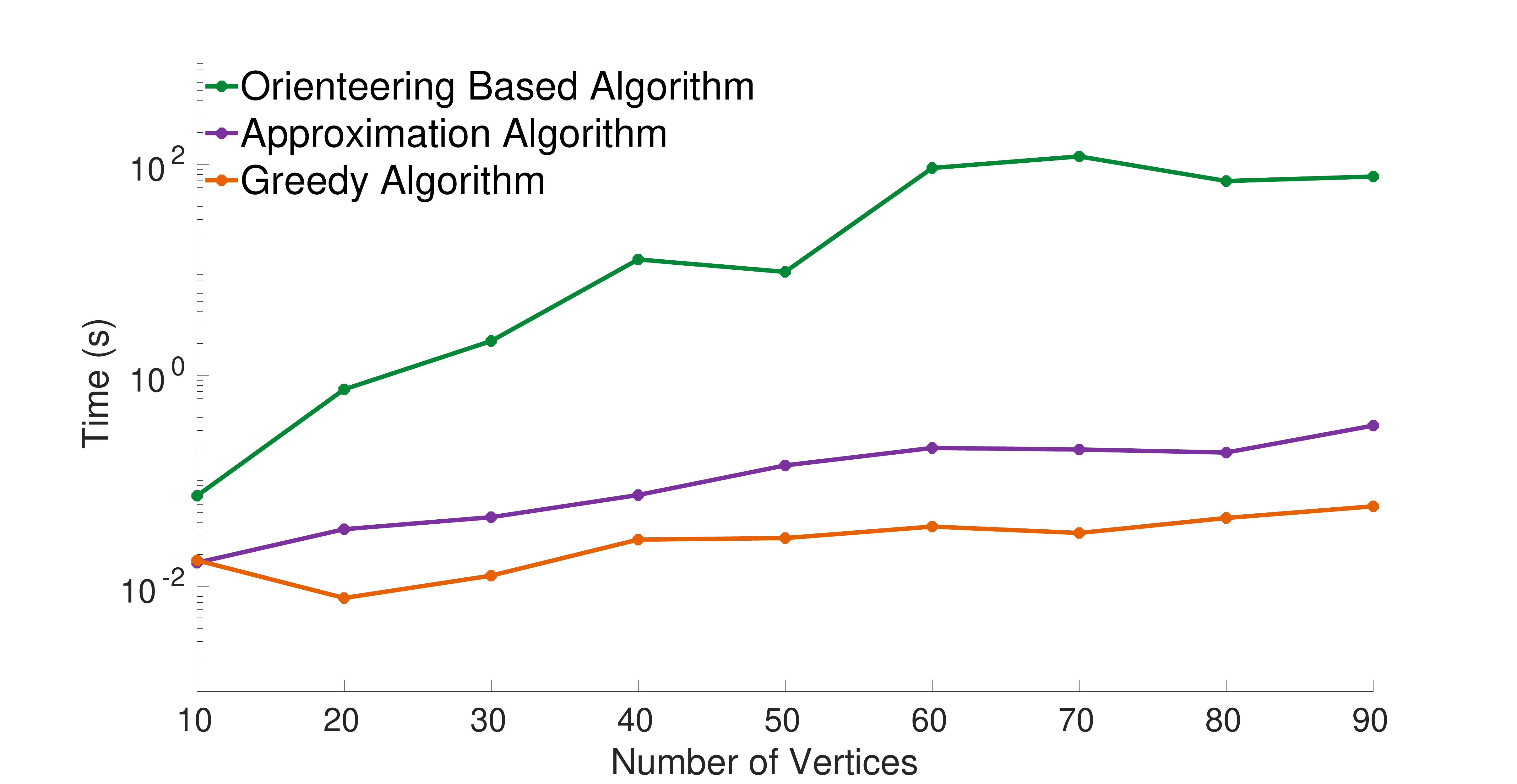}
\captionof{figure}{}
\label{fig:runtimes}
\end{subfigure}\\
\begin{subfigure}{.9\textwidth}
\centering
\includegraphics[width=\linewidth]{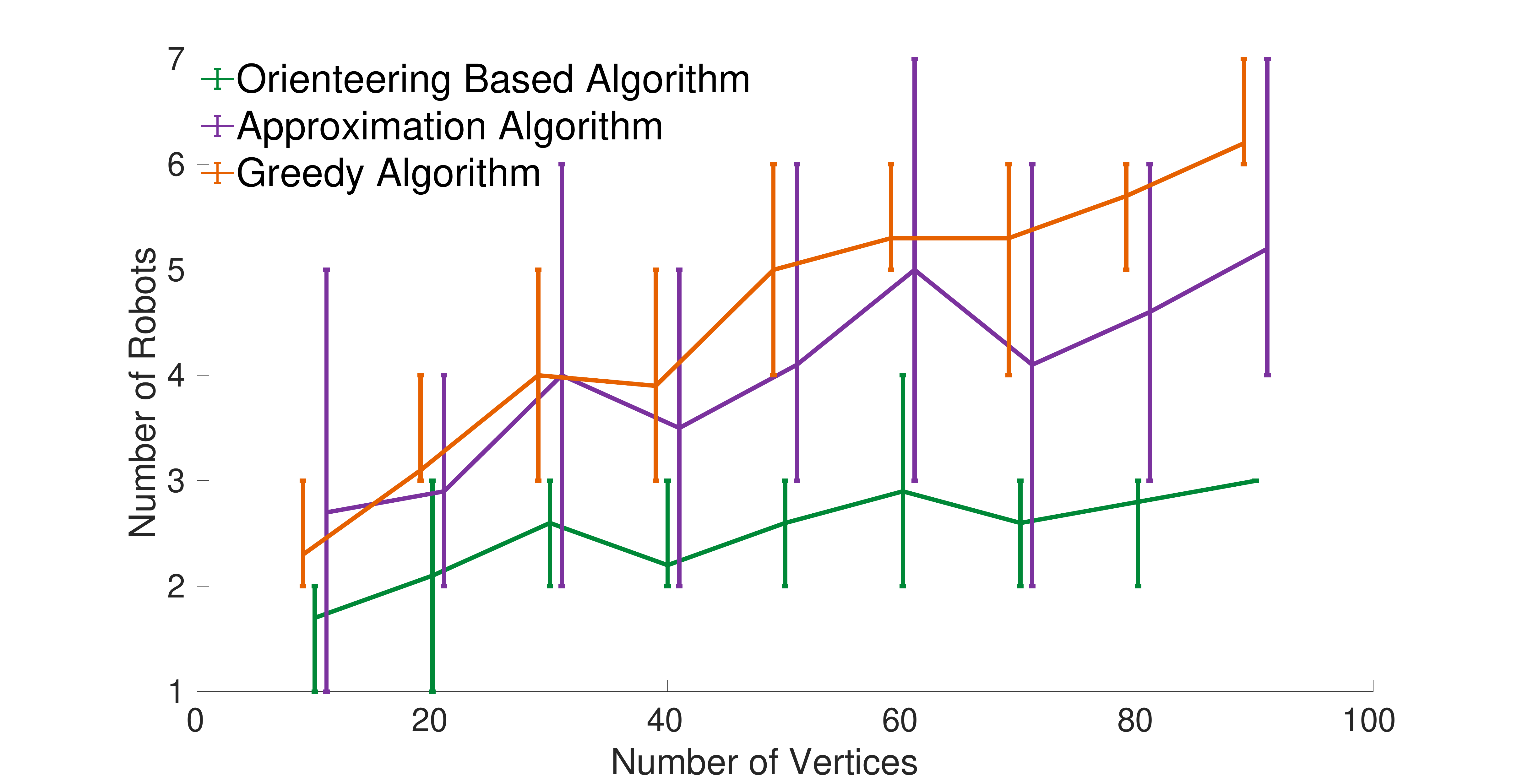}
\captionof{figure}{}
\label{fig:obj_value}
\end{subfigure}
\caption{Average run times of the algorithms (a), and number of robots returned by each algorithm (b). The line plot shows the mean over $10$ random instances for each graph size. The error bars in (b) show the minimum and maximum number of robots required for a graph size. }
\label{fig:alg_comp}
\end{figure}

\subsection{Persistent 3D Scene Reconstruction}
Another application of our algorithms is in capturing images for 3D reconstruction of a scene.  Since existing algorithms focus on computing robot paths to map a static scene~\cite{bircher2015structural,roberts2017submodular}, our algorithms could be applied to persistently monitor and thus maintain an up-to-date reconstruction of a scene that changes over time. To demonstrate this, we create problem instances using a method similar to~\cite{roberts2017submodular}. The viewpoints were generated on a grid around the building to be monitored. For each viewpoint, five camera angles were randomly generated, and best angle was selected for each viewpoint based on a view score that was calculated assuming a square footprint for the camera. For each camera angle, equally placed rays were projected onto the building within the footprint and a score was calculated based on the distance and incidence angle of the ray. This calculation is similar to that in~\cite{roberts2017submodular}, although they used a more detailed hemisphere coverage model.

After selecting the viewing angles, the final score of a camera pose was evaluated as in~\cite{roberts2017submodular} by greedily picking the best viewpoint first and evaluating the marginal score of other viewpoints. The resulting graph had $109$ vertices. The latencies were set such that the most informative viewpoint is visited every $8$ minutes and on average each viewpoint is visited every $50$ minutes. Algorithm 3 found a solution in $150$ seconds using two walks, as shown in Figure~\ref{fig:taj}.
Note that Algorithm~\ref{approx_latency} returned a solution with $3$ robots.
\begin{figure}[t]
\centering
\begin{subfigure}{.5\textwidth}
\centering
\includegraphics[width=\linewidth]{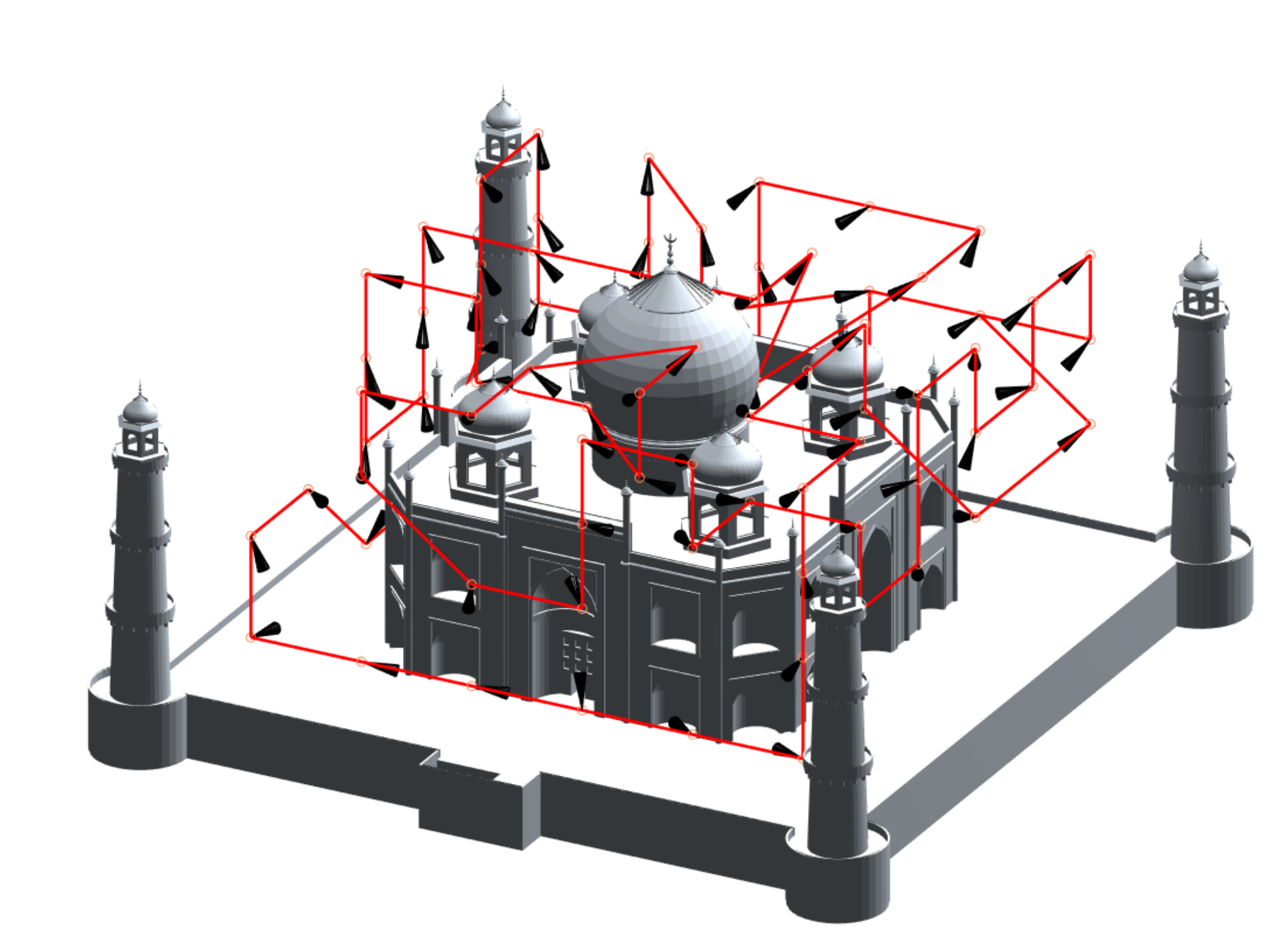}
\label{fig:taj1}
\end{subfigure}%
\begin{subfigure}{.5\textwidth}
\centering
\includegraphics[width=\linewidth]{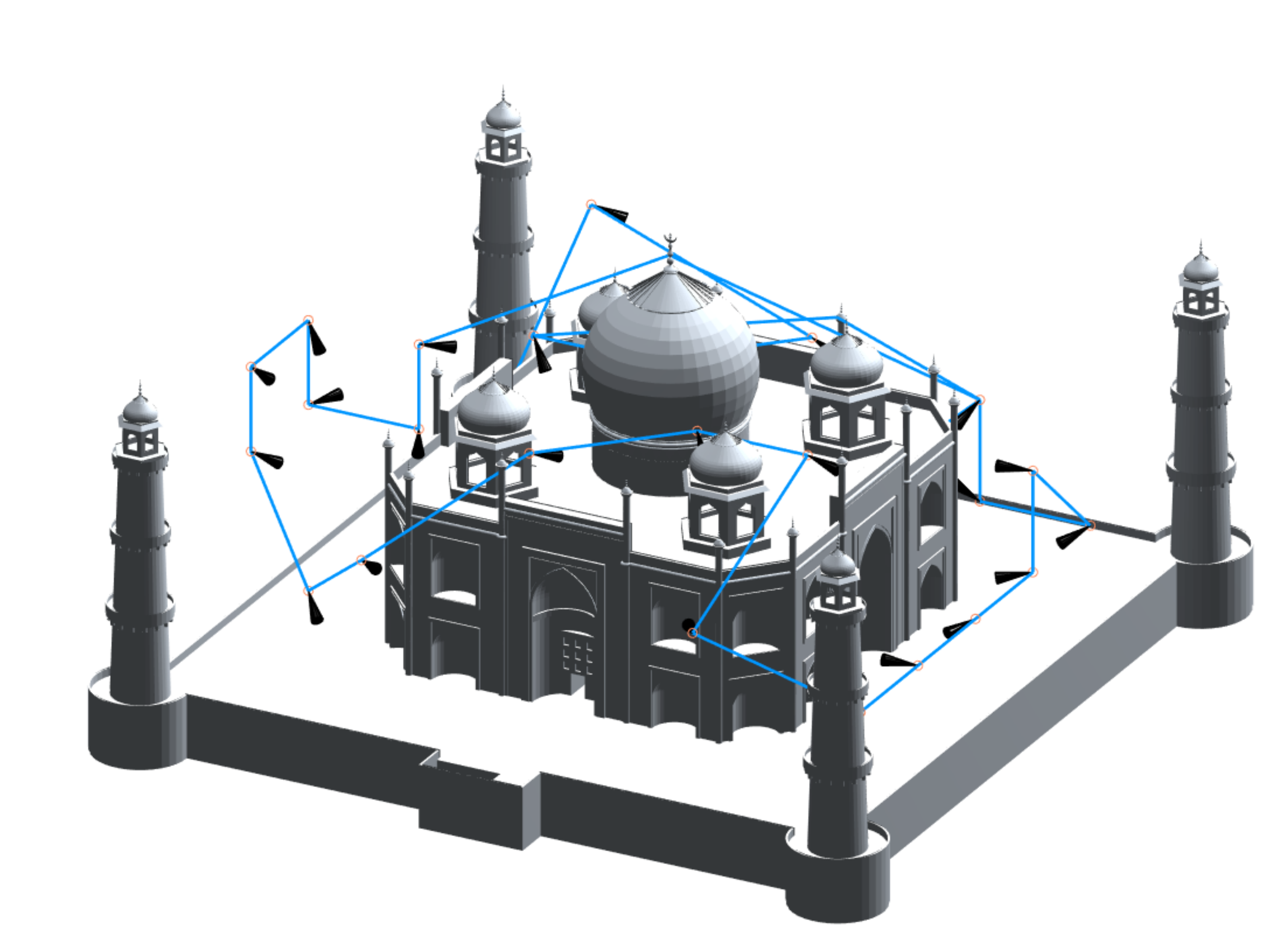}
\label{fig:taj2}
\end{subfigure}
\caption{The walks returned by Algorithm~\ref{alg:greedy} to continually monitor the mausoleum of the Taj Mahal. The cones at each viewpoint show the camera angle. Note that the walk on the left is not a tour, as it visits the vertex with least latency twice within a period. The walk on the right is a tour and it visits the vertices that the first robot was unable to cover.}
\label{fig:taj}
\end{figure}
\subsection{Comparison with Existing Algorithms in Literature}
In~\cite{Drucker2014thesis,drucker2016cyclic} the authors propose an SMT (Satisfiability Modulo Theory) based approach using Z3 solver~\cite{de2008z3} to solve the decision version of the problem. The idea is to fix an upper bound on the period of the solution and model the problem as a constraint program. The authors also provide benchmark instances for the decision version of the problem. We tested our algorithms on the benchmark instances provided and compare the results to the SMT based solver provided by~\cite{drucker2016cyclic}.

Out of 300 benchmark instances, given a time limit of 10 minutes, the Z3 solver returned 182 instances as satisfiable with the given number of robots. We ran our algorithms for each instance and checked if the number of robots returned are less than or equal to the number of robots in the benchmark instance. The approximation algorithm satisfied $170$ instances whereas Algorithm~\ref{alg:greedy} satisfied $178$ instances. The four satisfiable instances that Algorithm~\ref{alg:greedy} was unable to satisfy had optimal solutions where the walks share the vertices, and Algorithm~\ref{alg:greedy} returned one more robot than the optimal in all those instances. The drawback of the constraint program to solve the problem is the scalability. It spent an average of $3.76$ seconds on satisfiable instances whereas Algorithm~\ref{alg:greedy} spent $3$ ms on those instances on average. Moreover, on one such instance where Algorithm~\ref{alg:greedy} returned one more robot than the Z3 solver, Z3 solver spent 194 seconds as compared to $\sim 5$ ms for Algorithm~\ref{alg:greedy}. Note that these differences are for benchmark instances having up to $7$ vertices. As shown above, Algorithm~\ref{alg:greedy} takes $\sim 100$ seconds for $90$ vertex instances whereas we were unable to solve instances with even $15$ vertices within an hour using the Z3 solver. Hence, the scalability of the Z3 based solver hinders its use for problem instances of practical sizes.

\section{Conclusion and Future Work}
In this paper we presented and analyzed an approximation and a heuristic algorithm for the problem of finding the minimum number of robots that can satisfy the latency constraints for the vertices in a graph. We demonstrated the performance of the algorithms through simulations. We also presented and analyzed an algorithm for minimizing the maximum latency given multiple robots. Finding the relation between the partitioned optimal solution and a general optimal solution is an interesting direction for future work.
%
% ---- Bibliography ----
%

%

\end{document}